\DeclareMathOperator{\tr}{tr}
\theoremstyle{plain}
\newtheorem{lemma}{Lemma}
\begin{document}

\title[Relative Pose Estimation for Cameras and Generalized Cameras]{Efficient Relative Pose Estimation for Cameras and Generalized Cameras in Case of Known Relative Rotation Angle}

\author{Evgeniy Martyushev}
\address{South Ural State University, 76 Lenin Avenue, Chelyabinsk 454080, Russia}
\email{mev@susu.ac.ru}

\author{Bo Li}
\address{TrunkTech Co., Ltd., China}
\email{prclibo@gmail.com}

\date{January 31, 2019}

\keywords{Multiview geometry, Relative pose estimation, Generalized cameras, Epipolar constraints, Relative rotation angle, Gr\"{o}bner basis}

\begin{abstract}
We propose two minimal solutions to the problem of relative pose estimation of (i) a calibrated camera from four points in two views and (ii) a calibrated generalized camera from five points in two views. In both cases, the relative rotation angle between the views is assumed to be known. In practice, such angle can be derived from the readings of a 3d gyroscope. We represent the rotation part of the motion in terms of unit quaternions in order to construct polynomial equations encoding the epipolar constraints. The Gr\"{o}bner basis technique is then used to efficiently derive the solutions. Our first solver for regular cameras significantly improves the existing state-of-the-art solution. The second solver for generalized cameras is novel.

The presented minimal solvers can be used in a hypothesize-and-test architecture such as RANSAC for reliable pose estimation. Experiments on synthetic and real datasets confirm that our algorithms are numerically stable, fast and robust.
\end{abstract}

\maketitle

\section{Introduction}

The problem of \emph{relative pose estimation} of a moving camera consists in determining the current camera pose (both position and orientation) with respect to a coordinate frame related to its previous position. Basically, the estimation must be done only from the image data captured by the camera. The relative pose estimation is a central task in computer vision and robotics. Its numerous applications include, but are not limited to, robot localization and mapping, augmented reality, autonomous driving and parking, visual odometry and egomotion estimation.

The standard tool for estimating relative pose of a calibrated camera moving in space is the 5-point algorithm by Nist{\'e}r~\cite{Nister} and its numerous modifications~\cite{SEN06,KBP,LiHartley}. The 5-point algorithm is known to be minimal, i.e. its related polynomial ideal is generically zero-dimensional. There also exist non-minimal relative pose solutions based on a bigger number of point correspondences, e.g. 6-point~\cite{philip98,pizarro03}, 7-point~\cite{HZ}, and linear $8$-point algorithms~\cite{longuet81,hartley95}. However, in most cases these solutions are inferior to the 5-point algorithm w.r.t. the robustness~\cite{Nister,SEN06}.

The 5-point relative pose problem, as well as many other problems in computer vision, can be formulated in terms of polynomial equations on some parameters related to the problem. An efficient method to find solutions of polynomial equations is the \emph{Gr\"{o}bner basis} computation. From linear algebra point of view, the Gr\"{o}bner basis can be computed by a single Gauss-Jordan (G-J) elimination on a certain matrix constructed from the coefficients of polynomials. This matrix is known as an \emph{elimination template} for the Gr\"{o}bner basis computation~\cite{KBP08}, since generically its size and structure are independent from the coefficients of polynomials. The size of such template is crucial for the efficiency of the resulting solver. Hence, it is important to formulate and parameterize the problem so that its elimination template would be as small as possible.

The problem of relative pose estimation can be modified and generalized in many directions. For example, we can consider a similar problem for the so-called \emph{generalized cameras}. In contrast to central projection model of a regular\footnote{Hereinafter, the cameras will sometimes be referred to as regular in order to distinguish them from generalized cameras.} camera, the image rays for a generalized camera do not necessarily intersect in a common focal center of projection~\cite{Pless}. Generalized cameras can be used for describing different popular camera models~\cite{sturm05} such as a multi-camera rig consisting of several rigidly mounted cameras moving as a whole, a non-central catadioptric camera, a camera with radial and tangential distortion, etc. In general, the relative pose of a generalized camera can be estimated minimally from six~\cite{SNOA,VAL15} and linearly from seventeen~\cite{Pless,LHK08} point correspondences in two views.

Another modification of the relative pose estimation problem arises if a camera is coupled with an \emph{inertial measurement unit} (IMU) sensor. The additional information extracted from the IMU readings can be used to significantly simplify the relative pose estimation process or to make it faster and more robust. For example, the IMU measurements can be used to derive the vertical direction in the camera coordinate frame reducing thus the number of unknown extrinsics from five to three~\cite{Kalantari-vert}. See also~\cite{FTP10,KBP10,Lee-vert} for some other camera-IMU fusion applications.

The contribution of the present paper is two new minimal solutions to the problem of relative pose estimation of (i) a calibrated camera from four points in two views and (ii) a calibrated generalized camera from five points in two views. In both cases, the relative rotation angle between the views is assumed to be known. In practice, this angle can be calculated from the readings of an IMU sensor. The advantage in using only relative rotation angle follows from its invariance. As it was mentioned in~\cite{LHLP}, if a camera and IMU are fixed on some rigid platform, then the rotation angle of the IMU can be directly used as the rotation angle of the camera. Thus, there is no need in the external camera-IMU calibration and the joint usage of these devices becomes convenient and flexible in practice.

The problem of pose estimation for regular cameras in case of known relative rotation angle was stated in~\cite{LHLP}, where the two solutions to the problem have been proposed. The first Gr\"{o}bner-basis solution has an elimination template of size $270\times 290$ and hence its practical use is questionable especially for real-time applications. In contrast, our new solver has an elimination template of size $16\times 36$ only. Actually, the Gr\"{o}bner basis computation for our solver is performed even faster than for the 5-point algorithm by Stew{\'e}nius et al., cf. Subsection~\ref{ssec:timings}.

The numerical solution from~\cite{LHLP} is fast enough, although it is still approximately twice slower than our solver. The relative pose of a camera is estimated by the gradient descent-based method from a big enough number of seeds uniformly distributed on a unit 2-sphere. The method works well while the roots on the sphere are well-separated. If it is not the case, there is no guaranty that all solutions to the problem will be found.

As for the pose estimation of generalized cameras in case of known relative rotation angle, the problem has not been studied earlier. Our ``5-point+angle'' solution can be considered as an alternative to the 6-point method from~\cite{SNOA}. Based on the experimental results, we conclude that our solution outperforms the 6-point solver both in numerical accuracy and in speed.

The present paper is a continuation of the research started in recent work~\cite{Mart18}, where a new method for the camera self-calibration has been proposed. The method is also based on the joint usage of multiple point correspondences in two views and the known relative rotation angle.

The rest of the paper is organized as follows. In Section~\ref{sec:prel}, we briefly recall some notions from multiview geometry, e.g. the epipolar and generalized epipolar constraints which are needed to follow the derivation of our minimal solvers. In Section~\ref{sec:alg4}, we describe in detail the 4-point algorithm for regular cameras. Section~\ref{sec:alg5} summarizes the similar 5-point approach for generalized cameras. In Section~\ref{sec:synth}, we test the algorithms in a series of experiments on synthetic data. In Section~\ref{sec:real}, the algorithms are validated on real datasets. In Section~\ref{sec:disc}, we discuss the results of the paper.

\section{Preliminaries}
\label{sec:prel}

\subsection{Notation}

We preferably use $\alpha, \beta, \ldots$ for scalars, $a, b, \ldots$ for column 3-vectors or polynomials, and $A, B, \ldots$ both for matrices and column 4-vectors. For a matrix $A$ the entries are~$(A)_{ij}$, the transpose is~$A^{\mathrm T}$, the determinant is $\det A$, and the trace is~$\tr A$. For two 3-vectors~$a$ and~$b$ the cross product is $a\times b$. For a vector~$a$ the notation~$[a]_\times$ stands for the skew-symmetric matrix such that $[a]_\times b = a \times b$ for any vector~$b$. We use~$I$ for the identity matrix and $\|\cdot\|$ for the Frobenius norm.

\subsection{Regular Cameras}

Let there be given two calibrated cameras represented by $3\times 4$ matrices of the form
\begin{equation}
P' = \begin{bmatrix} R' & t' \end{bmatrix}, \quad P'' = \begin{bmatrix} R'' & t'' \end{bmatrix},
\end{equation}
where $R', R'' \in \mathrm{SO}(3)$ are called the \emph{rotation matrices} and~$t', t''$ are called the \emph{translation vectors}. Let~$Q$ be a $4$-vector representing a point in 3-space in homogeneous coordinates, $q'$ and~$q''$ be its images, that is
\begin{equation}
q' \sim P' Q, \quad q'' \sim P'' Q,
\end{equation}
where $\sim$ means an equality up to scale. Then the \emph{epipolar constraint} for a pair $(q', q'')$ reads
\begin{equation}
\label{eq:epipolar}
q''^{\mathrm T} (R [t']_\times - [t'']_\times R) q' = 0,
\end{equation}
where $R = R''R'^{\mathrm T}$ is the \emph{relative rotation matrix}, i.e. a rotation matrix of the second camera w.r.t. the first camera frame.

\subsection{Generalized Cameras}

Given two generalized cameras, let $(R', t')$ (resp. $(R'', t'')$) be the rotation and translation of the first (resp. second) generalized camera w.r.t. the world frame. Let $\begin{bmatrix}q'& m'\end{bmatrix}^{\mathrm T}$ and $\begin{bmatrix}q''& m''\end{bmatrix}^{\mathrm T}$ be the Pl\"{u}cker coordinates of a pair of image rays intersecting at a point. Transforming coordinates $\begin{bmatrix}q'& m'\end{bmatrix}^{\mathrm T}$ into the world coordinate frame yields~\cite{SNOA}
\begin{equation}
\begin{bmatrix}\hat q'\\ \hat m'\end{bmatrix} = \begin{bmatrix}R'^{\mathrm T}q'\\ R'^{\mathrm T}m' + R'^{\mathrm T}[t']_\times q'\end{bmatrix},
\end{equation}
and similarly for $\begin{bmatrix}\hat q'' & \hat m''\end{bmatrix}^{\mathrm T}$. Then the intersection condition $\hat q'^{\mathrm T} \hat m'' + \hat q''^{\mathrm T} \hat m' = 0$ leads to the \emph{generalized epipolar constraint}~\cite{Pless,SNOA}
\begin{equation}
\label{eq:epipolar_gen}
\begin{bmatrix}q''\\ m''\end{bmatrix}^{\mathrm T}\begin{bmatrix}R [t']_\times - [t'']_\times R & R\\ R & 0_{3\times 3}\end{bmatrix} \begin{bmatrix}q'\\ m'\end{bmatrix} = 0,
\end{equation}
where $R = R''R'^{\mathrm T}$ is the relative rotation matrix.

\subsection{Quaternion Representation}

A matrix $R \in \mathrm{SO}(3)$ can be represented in terms of a unit quaternion $\begin{bmatrix}\sigma & u\end{bmatrix}^{\mathrm T}$ as follows
\begin{equation}
\label{eq:matrixR}
R = (2\sigma^2 - 1)I + 2(uu^{\mathrm T} - \sigma[u]_\times),
\end{equation}
where
\begin{equation}
\label{eq:quadconstr}
\|u\|^2 + \sigma^2 = 1.
\end{equation}
Then we have
\begin{equation}
\tr R = 4\sigma^2 - 1 = 2\cos\theta + 1,
\end{equation}
where $\theta$ is the relative rotation angle between the two calibrated camera frames. It follows that in case~$\theta$ is known, the value of parameter~$\sigma$ in representation~\eqref{eq:matrixR} is known too and can be expressed in form
\begin{equation}
\label{eq:sigma2}
\sigma^2 = \frac{\cos\theta + 1}{2}.
\end{equation}
The two-fold ambiguity $\pm\begin{bmatrix}\sigma & u\end{bmatrix}^{\mathrm T}$ of the quaternion representation is resolved by fixing the sign of parameter~$\sigma$.

\section{Description of the 4-point Algorithm}
\label{sec:alg4}

The initial data is the four point correspondences $q'_i \leftrightarrow q''_i$, $i = 1, \ldots, 4$, and also the relative rotation angle~$\theta$.

\subsection{Polynomial Equations}

We start from deriving polynomial constraints on the camera extrinsics using the parametrization of~$t'$ and~$t''$ adapted from~\cite{SNOA} for regular cameras. Let~$Q_i$ be a point in 3-space so that
\begin{equation}
\lambda_i q'_i = P' Q_i, \quad \mu_i q''_i = P'' Q_i,
\end{equation}
where $\lambda_i$ and~$\mu_i$ are some scalars. Using the rigid motion ambiguity of the world coordinate frame, we set $Q_i = \begin{bmatrix} 0 & 0 & 0 & 1\end{bmatrix}^{\mathrm T}$ for a certain~$i \in \{1, \ldots, 4\}$. Then it follows that
\begin{equation}
\label{eq:transl}
t' = \lambda_i q'_i, \quad t'' = \mu_i q''_i.
\end{equation}

The epipolar constraint~\eqref{eq:epipolar} for the $j$th and $k$th correspondences yields
\begin{equation}
\label{eq:eqF}
F_{ijk} \begin{bmatrix}\lambda_i \\ \mu_i\end{bmatrix} = 0_{2\times 1},
\end{equation}
where
\begin{equation}
\label{eq:Fijk}
F_{ijk} = \begin{bmatrix} {q''_j}^{\mathrm T} R p'_{ij} & {p''_{ij}}^{\mathrm T} R q'_j \\ {q''_k}^{\mathrm T} R p'_{ik} & {p''_{ik}}^{\mathrm T} R q'_k \end{bmatrix}.
\end{equation}
Here vector $p'_{ij}$ is defined by $p'_{ij} = q'_i \times q'_j$ and similarly for $p''_{ij}$.

Eq.~\eqref{eq:eqF} assumes that $\det F_{ijk} = 0$ for all (pairwise distinct) indices $i, j$ and~$k$. There exist 20 possible equations, but due to the following lemma, only 4 of them are actually distinct.

\begin{lemma}
\label{lem:FF}
Let $F_{ijk}$ be a matrix defined in~\eqref{eq:Fijk}. Then the following identity holds
\begin{equation}
\label{eq:FF}
\det F_{ijk} = \det F_{jki}.
\end{equation}
\end{lemma}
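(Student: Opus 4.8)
The plan is to exploit that $R$ is a rotation in order to turn the entries of $F_{ijk}$ into scalar triple products, and then to collapse the resulting expression with two elementary vector identities. First I would use $R(a\times b) = (Ra)\times(Rb)$, which holds for every $R\in\mathrm{SO}(3)$, together with $p'_{ij} = q'_i\times q'_j$ and $p''_{ij} = q''_i\times q''_j$. Writing $u_\ell = Rq'_\ell$, $v_\ell = q''_\ell$, and $[a\ b\ c]$ for the determinant of the $3\times 3$ matrix with columns $a,b,c$, the entry ${q''_j}^{\mathrm T}Rp'_{ij}$ equals ${v_j}^{\mathrm T}(u_i\times u_j) = [v_j\ u_i\ u_j]$ and the entry ${p''_{ij}}^{\mathrm T}Rq'_j$ equals $(v_i\times v_j)^{\mathrm T}u_j = [v_i\ v_j\ u_j]$, so that
\[
\det F_{ijk} = [v_j\ u_i\ u_j]\,[v_i\ v_k\ u_k] - [v_i\ v_j\ u_j]\,[v_k\ u_i\ u_k],
\]
and $\det F_{jki}$ is the same expression after the cyclic relabelling $i\to j\to k\to i$. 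Note in passing that each of these is multilinear of degree one in each of the six vectors $u_i,u_j,u_k,v_i,v_j,v_k$.

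Next I would form the difference $\det F_{ijk} - \det F_{jki}$, a sum of four terms, and regroup it into two pairs. Using only the antisymmetry (hence cyclic invariance) of $[a\ b\ c]$, each pair can be arranged into the shape $(a\cdot p)(b\cdot q) - (a\cdot q)(b\cdot p)$, which by the Lagrange identity $(a\times b)\cdot(p\times q) = (a\cdot p)(b\cdot q) - (a\cdot q)(b\cdot p)$ collapses to a single triple product. Tracking the signs, the difference reduces to $Y\cdot(X\times w) + X\cdot(Y\times w)$, where $X = u_i\times u_j$, $Y = v_i\times v_j$, and $w = u_k\times v_k$. This vanishes because $X\cdot(Y\times w) = [X\ Y\ w] = -[Y\ X\ w] = -Y\cdot(X\times w)$, which gives $\det F_{ijk} = \det F_{jki}$.

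I expect the middle step to be the only real obstacle: there are several plausible ways to pair and cyclically rotate the four triple products, and only the correct pairing, with the signs from the determinant's antisymmetry carried through consistently, exhibits the Lagrange-identity structure. I would therefore fix an orientation convention for $[a\ b\ c]$ at the outset and account for every transposition. A more mechanical alternative is to expand each product of two $3\times 3$ determinants as one $3\times 3$ Gram-type determinant via $\det(A^{\mathrm T})\det(B) = \det(A^{\mathrm T}B)$ and then verify the polynomial identity directly; this sidesteps the guessing but is heavier to write out.
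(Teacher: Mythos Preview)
Your proof is correct. The setup --- using $R(a\times b)=(Ra)\times(Rb)$ to rewrite the entries of $F_{ijk}$ and $F_{jki}$ as scalar triple products in the six vectors $u_\ell=Rq'_\ell$, $v_\ell=q''_\ell$ --- is exactly what the paper does (with the notation $a,b,c,d,e,f$ in place of your $u_i,u_j,u_k,v_i,v_j,v_k$). The difference is in the finishing move. The paper invokes the identity $z_1^{\mathrm T}(x_1\times y_1)\cdot z_2^{\mathrm T}(x_2\times y_2)=\det\bigl([x_1\ y_1\ z_1]^{\mathrm T}[x_2\ y_2\ z_2]\bigr)$ to turn each product of two triple products into a $3\times3$ Gram-type determinant, and then states that the resulting determinant identity ``is verified directly.'' That is precisely the mechanical alternative you describe in your final paragraph. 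Your primary route --- pairing the four terms so that each pair has the Lagrange form $(a\cdot p)(b\cdot q)-(a\cdot q)(b\cdot p)$, collapsing to $Y\cdot(X\times w)+X\cdot(Y\times w)$ with $X=u_i\times u_j$, $Y=v_i\times v_j$, $w=u_k\times v_k$, and then observing this vanishes by the antisymmetry of the triple product --- is a genuinely cleaner argument: it explains \emph{why} the identity holds rather than reducing it to a symbolic check, and it avoids expanding any $3\times3$ determinants of dot products. The paper's route has the virtue of requiring no clever pairing, at the cost of leaving the final verification to brute force.
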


The proof of Lemma~\ref{lem:FF} is given in Appendix~\ref{app:A}. We represent the relative rotation matrix~$R$ in form~\eqref{eq:matrixR} and then, based on the lemma, define four quartic polynomials
\begin{equation}
\label{eq:4pols}
\begin{split}
f_1 &= \det F_{234} = \det F_{342} = \det F_{423},\\
f_2 &= \det F_{341} = \det F_{413} = \det F_{134},\\
f_3 &= \det F_{412} = \det F_{124} = \det F_{241},\\
f_4 &= \det F_{123} = \det F_{231} = \det F_{312}.
\end{split}
\end{equation}

Further, let
\begin{equation}
\label{eq:polh}
h = \|u\|^2 + \sigma^2 - 1,
\end{equation}
where parameter~$\sigma$ is expressed in terms of the known angle~$\theta$ by formula~\eqref{eq:sigma2}. Thus we have the following system of polynomial equations
\begin{equation}
\label{eq:sys}
f_1 = f_2 = f_3 = f_4 = h = 0.
\end{equation}
The objective is to find all its real solutions.

\subsection{Gr\"{o}bner Basis}
\label{ssect:gb}

Let us set $u = \begin{bmatrix} \alpha & \beta & \gamma\end{bmatrix}^{\mathrm T}$ and define an ideal
\begin{equation}
J = \langle f_1, \ldots, f_4, h \rangle \subset \mathbb C[\alpha, \beta, \gamma].
\end{equation}
In order to find out the structure of the affine variety of~$J$, we generated a number of random instances of the problem over the field of rationals. Then, using the computer algebra system Macaulay2~\cite{macaulay}, we found that ideal~$J$ is generically zero-dimensional of degree~$20$, that is in general system~\eqref{eq:sys} has exactly 20 complex solutions. The Gr\"{o}bner basis of~$J$ w.r.t. the graded reverse lexicographic order consists of ten polynomials with the following leading monomials:
\begin{equation}
\beta^2\gamma^3 \quad \alpha\gamma^4 \quad \beta\gamma^4 \quad \gamma^5 \quad \alpha\beta^3 \quad \beta^4 \quad \alpha\beta^2\gamma \quad \beta^3\gamma \quad \alpha\beta\gamma^2 \quad \alpha^2.
\end{equation}

Now we propose an efficient algorithm for computing the Gr\"{o}bner basis of ideal~$J$.

Let matrix~$A$ of size $4\times 35$ correspond to the polynomials $f_1, \ldots, f_4$ defined in~\eqref{eq:4pols}, i.e.
\begin{equation}
A x = 0_{4\times 1},
\end{equation}
where $x$ is a monomial vector. The $i$th row of matrix~$A$ consists of coefficients of the $i$th polynomial.

First we expand system~\eqref{eq:sys} with 12 more polynomials $\alpha f_i$, $\beta f_i$ and~$\gamma f_i$ for $i = 1, \ldots, 4$, and 20 polynomials of the form $m h$ for~$m$ being every monomial of the total degree up to~$3$. Thus we get
\begin{equation}
\label{eq:hatAhatx}
\hat A \hat x = 0_{36\times 1},
\end{equation}
where $\hat A$ is the $36\times 56$ coefficient matrix and~$\hat x$ is a monomial vector consisting of all up to degree~5 monomials. Let the first 20 monomials in~$\hat x$ be
\begin{multline}
\alpha^5 \quad \alpha^4\beta \quad \alpha^3\beta^2 \quad \alpha^2\beta^3 \quad \alpha^4\gamma \quad \alpha^3\beta\gamma \quad \alpha^2\beta^2\gamma \quad \alpha^3\gamma^2 \quad \alpha^2\beta\gamma^2 \quad \alpha^2\gamma^3 \\ \alpha^4 \quad \alpha^3\beta \quad \alpha^2\beta^2 \quad \alpha^3\gamma \quad \alpha^2\beta\gamma \quad \alpha^2\gamma^2 \quad \alpha^3 \quad \alpha^2\beta \quad \alpha^2\gamma \quad \alpha^2,
\end{multline}
and the rest of monomials in~$\hat x$ be ordered w.r.t. the graded reverse lexicographic order with $\alpha > \beta > \gamma$. Then
matrix~$\hat A$ can be represented in the following block form
\begin{equation}
\label{eq:matrixhatA}
\hat A = \begin{bmatrix}U & V \\ W & X\end{bmatrix},
\end{equation}
where $U$ is an upper-triangular $20\times 20$ matrix with $1$'s on its main diagonal, see Fig.~\ref{fig:matrixA}. It follows that $\det U = 1$ and the inverse to~$U$ always exists. By elementary row operations, matrix~$\hat A$ is equivalent to
\begin{equation}
\label{eq:newmatrix}
\begin{bmatrix}U & V \\ 0_{16\times 20} & X - WU^{-1}V\end{bmatrix}.
\end{equation}

\begin{figure}[t]
\centering
\includegraphics[width=0.8\hsize]{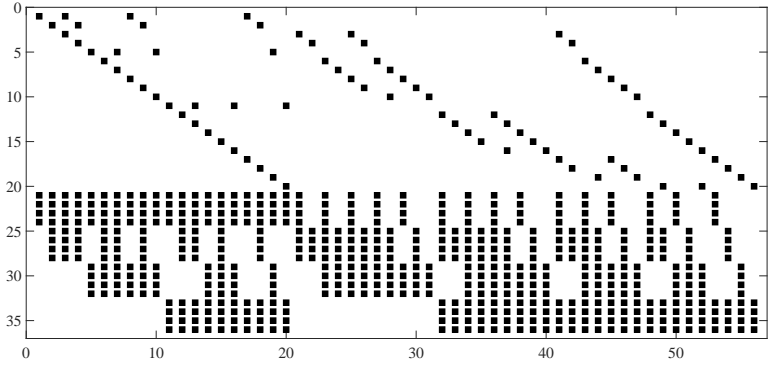}
\caption{The sparse structure of $36\times 56$ matrix~$\hat A$. The main diagonal of the $20\times 20$ left upper submatrix of~$\hat A$ consists of $1$'s}\label{fig:matrixA}
\end{figure}

Matrix $B = X - WU^{-1}V$ of size $16\times 36$ contains all necessary data for the Gr\"{o}bner basis computation. Its structure is shown in Fig.~\ref{fig:matrixB}.

It is important to note that matrix~$B$ needs not to be computed by its definition in an implementation. Instead, it is more efficient to use the pre-computed formulas for the nonzero entries of~$B$. These formulas are quite simple and contain the entries of matrix~$A$ and also the scalar $\tau = \sigma^2 - 1$. For example, one of the most ``complicated'' formulas reads
\begin{equation}
(B)_{16, 34} = 2\tau (A)_{4, 1} - \tau (A)_{4, 10} - (A)_{4, 26} + (A)_{4, 31}.
\end{equation}

\begin{figure}[t]
\centering
\includegraphics[width=0.4\hsize]{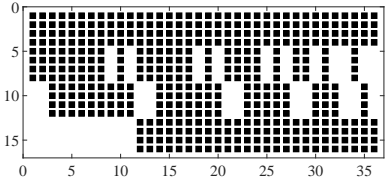}
\caption{The final $16\times 36$ elimination template matrix for computing the Gr\"{o}bner basis of~$J$}\label{fig:matrixB}
\end{figure}

Let~$\tilde B$ be the reduced row echelon form of~$B$, that is, generically,
\begin{equation}
\tilde B = \begin{bmatrix}I & C\end{bmatrix},
\end{equation}
where matrix~$C$ is of size $16\times 20$. Then, the polynomials corresponding to the last nine rows of matrix~$\tilde B$ together with polynomial~$h$ constitute the Gr\"{o}bner basis of ideal~$J$.

\subsection{Relative Rotation and Translation Recovery}

Given the Gr\"{o}bner basis of~$J$, we can readily construct the $20\times 20$ action matrix~$M_\gamma$ for multiplication by~$\gamma$ in the quotient ring $\mathbb C[\alpha, \beta, \gamma]/J$. The first seven rows of~$M_\gamma$ are the rows of matrix~$(-C)$ with numbers $8, 9, 10, 11, 14, 15$ and~$16$. The rest of~$M_\gamma$ consists of almost all zeros except
\begin{multline}
(M_\gamma)_{8,1} = (M_\gamma)_{9,2} = (M_\gamma)_{10,3} = (M_\gamma)_{11,4}\\
= (M_\gamma)_{12,7} = (M_\gamma)_{13,8} = (M_\gamma)_{14,9} = (M_\gamma)_{15,10} = (M_\gamma)_{16,11}\\
= (M_\gamma)_{17,14} = (M_\gamma)_{18,15} = (M_\gamma)_{19,16} = (M_\gamma)_{20,19} = 1.
\end{multline}
Up to 20 real solutions for vector~$u$ are then found from the eigenvectors of matrix~$M_\gamma$, see~\cite{CLS} for details.

It is important to note that because of numerical inaccuracies the solution for~$u$ does not exactly satisfy Eq.~\eqref{eq:quadconstr}. In order to rectify the solution, we replace~$u$ with the vector
\begin{equation}
\hat u = \frac{\sqrt{1 - \sigma^2}}{\|u\|}\, u.
\end{equation}
Then, rotation matrix~$R$ is computed from the unit quaternion $\begin{bmatrix}\sigma & \hat u\end{bmatrix}^{\mathrm T}$ by formula~\eqref{eq:matrixR}.

Using the rigid motion ambiguity of the world coordinate frame, we now set $t' = 0_{3\times 1}$. Then, translation vector~$t''$ is found from the epipolar constraints as the right null-vector of the matrix
\begin{equation}
\begin{bmatrix}
{q'_1}^{\mathrm T} R^{\mathrm T} [q''_1]_\times\\
{q'_2}^{\mathrm T} R^{\mathrm T} [q''_2]_\times\\
{q'_3}^{\mathrm T} R^{\mathrm T} [q''_3]_\times\\
{q'_4}^{\mathrm T} R^{\mathrm T} [q''_4]_\times
\end{bmatrix}.
\end{equation}
The scale ambiguity allows us to set $\|t''\| = 1$. Finally, the sign of~$t''$ is disambiguated by means of the cheirality constraint, see~\cite{HZ,Nister} for details.

\section{The 5-point Algorithm for Generalized Cameras}
\label{sec:alg5}

The initial data is the five corresponding image rays given in Pl\"{u}cker coordinates $\begin{bmatrix}q'_i \\ m'_i \end{bmatrix} \leftrightarrow \begin{bmatrix}q''_i \\ m''_i \end{bmatrix}$, $i = 1, \ldots, 5$, and also the relative rotation angle~$\theta$.

\smallskip
First, we fix the scale of the Pl\"{u}cker coordinates by setting $\|q'\| = \|q''\| = 1$. Then, for~$Q_i$ being a point in 3-space, it follows that
\begin{equation}
\label{eq:projgen}
[m'_i]_\times q'_i + \lambda_i q'_i = \begin{bmatrix}R' & t' \end{bmatrix} Q_i, \quad [m''_i]_\times q''_i + \mu_i q''_i = \begin{bmatrix}R'' & t'' \end{bmatrix} Q_i,
\end{equation}
where $\lambda_i$ and~$\mu_i$ are some scalars. We again follow~\cite{SNOA} to parameterize translations~$t'$ and~$t''$ by setting $Q_i = \begin{bmatrix} 0 & 0 & 0 & 1\end{bmatrix}^{\mathrm T}$ for a certain $i \in \{1, \ldots, 5\}$. Then it can be readily seen from~\eqref{eq:projgen} that
\begin{equation}
\label{eq:transl_gen}
t' = [m'_i]_\times q'_i + \lambda_i q'_i, \quad t'' = [m''_i]_\times q''_i + \mu_i q''_i.
\end{equation}
Substituting this into the generalized epipolar constraint~\eqref{eq:epipolar_gen} for the $j$th, $k$th and $l$th correspondences yields
\begin{equation}
\label{eq:eqF_gen}
G_{ijkl} \begin{bmatrix}\lambda_i \\ \mu_i \\ 1\end{bmatrix} = 0_{3\times 1},
\end{equation}
where $G_{ijkl}$ are $3\times 3$ matrices depending on~$R$ and the initial data. Eq.~\eqref{eq:eqF_gen} assumes that $\det G_{ijkl} = 0$ for all (pairwise distinct) indices $i, j, k$ and~$l$. There exist 20 possible equations but only the following 5 of them are actually distinct:
\begin{equation}
\label{eq:5pols}
\begin{split}
g_1 &= \det G_{2345},\\
g_2 &= \det G_{3451},\\
g_3 &= \det G_{4512},\\
g_4 &= \det G_{5123},\\
g_5 &= \det G_{1234}.
\end{split}
\end{equation}
If the relative rotation matrix~$R$ is represented in form~\eqref{eq:matrixR}, then all polynomials~$g_i$ are sextic in the entries of $u = \begin{bmatrix} \alpha & \beta & \gamma\end{bmatrix}^{\mathrm T}$. The last quadratic polynomial~$h$ is defined in~\eqref{eq:polh}.

In order to efficiently compute the Gr\"{o}bner basis of ideal $\langle g_1, \ldots, g_5, h \rangle$, we use the same approach as in Subsection~\ref{ssect:gb}. We start from the $134\times 165$ elimination template matrix~$\hat A$ corresponding to all possible multiples of the initial polynomials up to the total degree~$8$. Then, carefully eliminating redundant polynomials (rows) from~$\hat A$, we reduced its size to $121\times 165$. The rows of the resulting matrix are represented by
\begin{itemize}
\item $84$ polynomials of the form $m h$, where~$m$ runs over the set of all monomials of the total degree up to~$6$;
\item $35$ polynomials of the form $n g_i$, where $i = 1, \ldots, 5$ and~$n$ belongs to the set $\{\beta^2, \alpha\gamma, \beta\gamma, \gamma^2, \alpha, \beta, \gamma\}$;
\item $2$ polynomials $g_1$ and $g_2$.
\end{itemize}

The $84\times 84$ left upper submatrix of~$\hat A$ is upper-triangular with $1$'s on its main diagonal. This allows one to further reduce the size of the elimination template to $37\times 81$ similarly as it was done for the 4-point solver, cf. Eq~\eqref{eq:newmatrix}. Then all possible solutions to the problem are found from the eigenvectors of a $44\times 44$ action matrix.

Note that the 6-point algorithm by Stew{\'e}nius et al.~\cite{SNOA} requires the G-J elimination on a $60\times 120$ matrix and then more computation is needed to complete the Gr\"{o}bner basis construction. Besides, the algorithm produces up to 64 solutions. Our new 5-point algorithm produces the entire Gr\"{o}bner basis by a single G-J elimination on a $37\times 81$ matrix, and the number of possible solutions is only~44.

\section{Experiments on Synthetic Data}
\label{sec:synth}

In this section, we test our minimal solvers on synthetic data. The data setup is given in the following table:
\begin{center}
\bigskip\begin{tabular}{|l|c|}
\hline
Distance to the scene & 1\\\hline
Scene depth & 0.5\\\hline
Baseline length & 0.1\\\hline
Image dimensions & $752 \times 480$ (WVGA) \\\hline
Field of view & 60 degrees\\\hline
\end{tabular}\bigskip
\end{center}

The generalized cameras are modelled as multi-cameras with five and six optical centers for the 5- and 6-point solvers respectively. The optical centers are randomly generated inside two balls of radius $0.05$ each. The centers of the balls are related by the $(R, t)$ transformation. Each point is viewed from a single optical center.

In the sequel, we use the following abbreviations for the minimal algorithms:
\begin{center}
\bigskip\begin{tabular}{|l|c|}
\hline
4pt+angle & our 4-point solver\\\hline
4pt+angle nm\footnotemark & numerical 4-point solver from~\cite{LHLP}\\\hline
5pt & 5-point solver from~\cite{SEN06}\\\hline
gen. 5pt+angle & our 5-point solver \\\hline
gen. 6pt & 6-point solver from~\cite{SNOA}\\\hline
\end{tabular}\bigskip
\end{center}

\footnotetext{The MATLAB implementation of the ``4pt+angle nm'' algorithm was translated from the publicly available C++ code. This solver was only used in the speed comparison experiment.}

\subsection{Numerical Accuracy}

The numerical error is measured by the value
\begin{equation}
\min\limits_i\|R_i - \bar R\|,
\end{equation}
where $i$ counts all real solutions and $\bar R$ is the ground truth relative rotation matrix. The numerical error distributions are reported in Fig.~\ref{fig:ErrNum_timings} (left). The median errors are: $5.10\times 10^{-13}$ (4pt+angle), $1.35\times 10^{-14}$ (5pt), $4.86\times 10^{-11}$ (gen. 5pt+angle), $4.93\times 10^{-9}$ (gen. 6pt).

\subsection{Timings}
\label{ssec:timings}

We compared the speed of the algorithms. The mean running times for two major steps of the solvers are presented in Fig.~\ref{fig:ErrNum_timings} (right). The computations were performed on a system with 2.3~GHz processor. Note that the generalized-camera solvers are in order of magnitude slower. For the sake of visibility, their timings are divided by~$10$ on the figure. The mean total timings are: $0.75$~ms (4pt+angle), $1.3$~ms (4pt+angle nm), $0.49$~ms (5pt), $5.9$~ms (gen. 5pt+angle), $11.5$~ms (gen. 6pt).

It is interesting to note that the Gr\"{o}bner basis computation for our 4-point solution is performed even faster than for the 5-point algorithm. However, since the action matrix for the 5-point algorithm is four times smaller than for the 4-point problem, the eigendecomposition step for the 5pt solver is performed much faster and the total running time for it is about $1.5$ times less.

\begin{figure}[t]
\centering
\includegraphics[width=1\hsize]{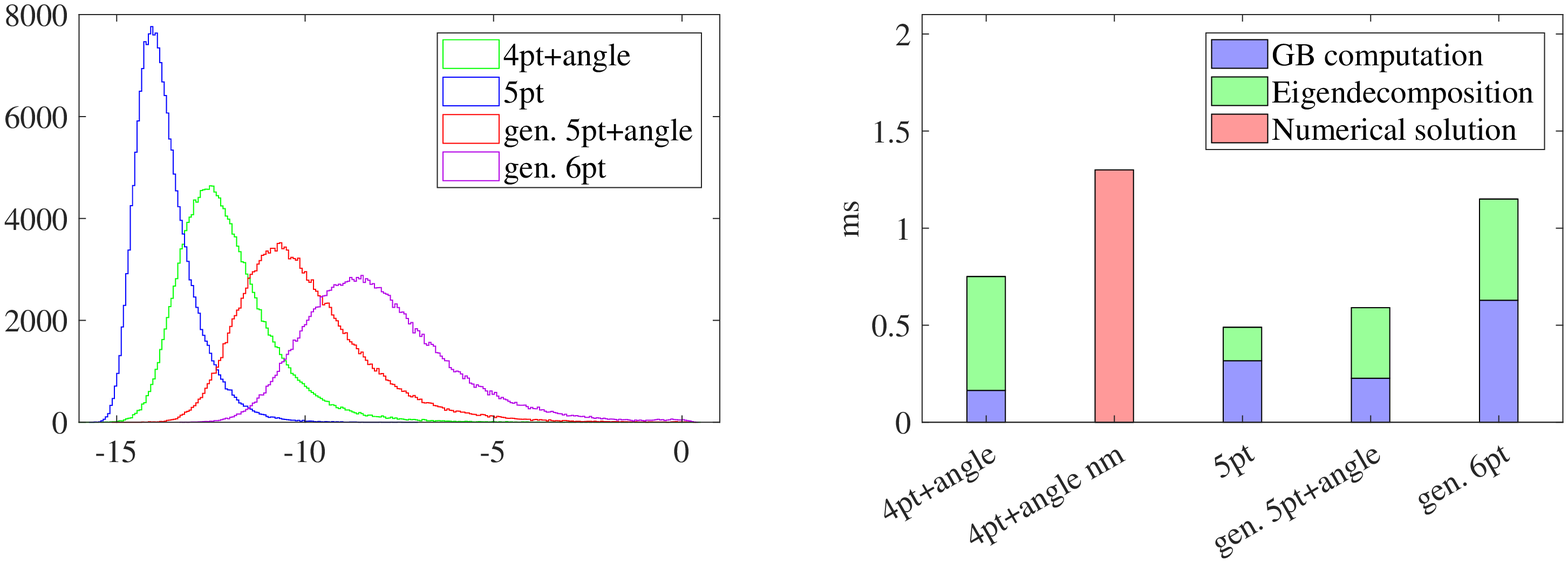}
\caption{$\log_{10}$ of numerical error (left) and mean speed (right) for different minimal solvers. The timings for generalized-camera solvers are divided by 10 for visibility}\label{fig:ErrNum_timings}
\end{figure}

\subsection{Performance Under Noise}

To evaluate the robustness of our solvers, we added two types of noise to the initial data. First, the image noise which is modelled as zero-mean, Gaussian distributed with a varying standard deviation. Second, the angle noise resulting from inaccurately found relative rotation angle~$\theta$. Following~\cite{LHLP}, we modelled that noise as $\theta s$, where~$s$ has the Gaussian distribution with zero mean and standard deviation~$\sigma$. In our experiments, $\sigma$ takes only two values: $0$ and~$0.05$.

The rotational and translational errors are presented in Fig.~\ref{fig:mR} and Fig.~\ref{fig:mT} respectively. Fig.~\ref{fig:mS} demonstrates the relative errors in translation scale for the generalized-camera solvers. Each point on the plots is a lower quartile of $10^4$ trials.

\begin{figure}[t]
\centering
\includegraphics[width=1\hsize]{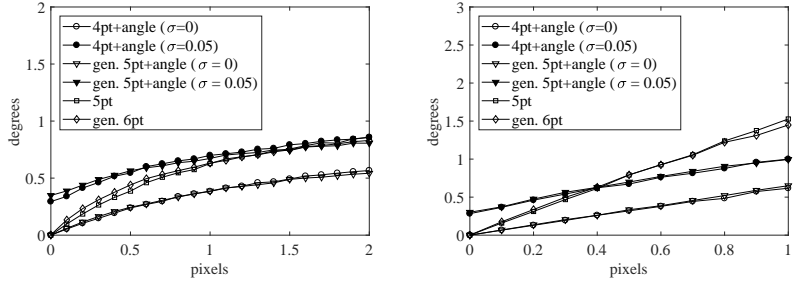}
\caption{Rotational errors against image noise. Left: forward motion, right: sideways motion}\label{fig:mR}
\end{figure}

\begin{figure}[t]
\centering
\includegraphics[width=1\hsize]{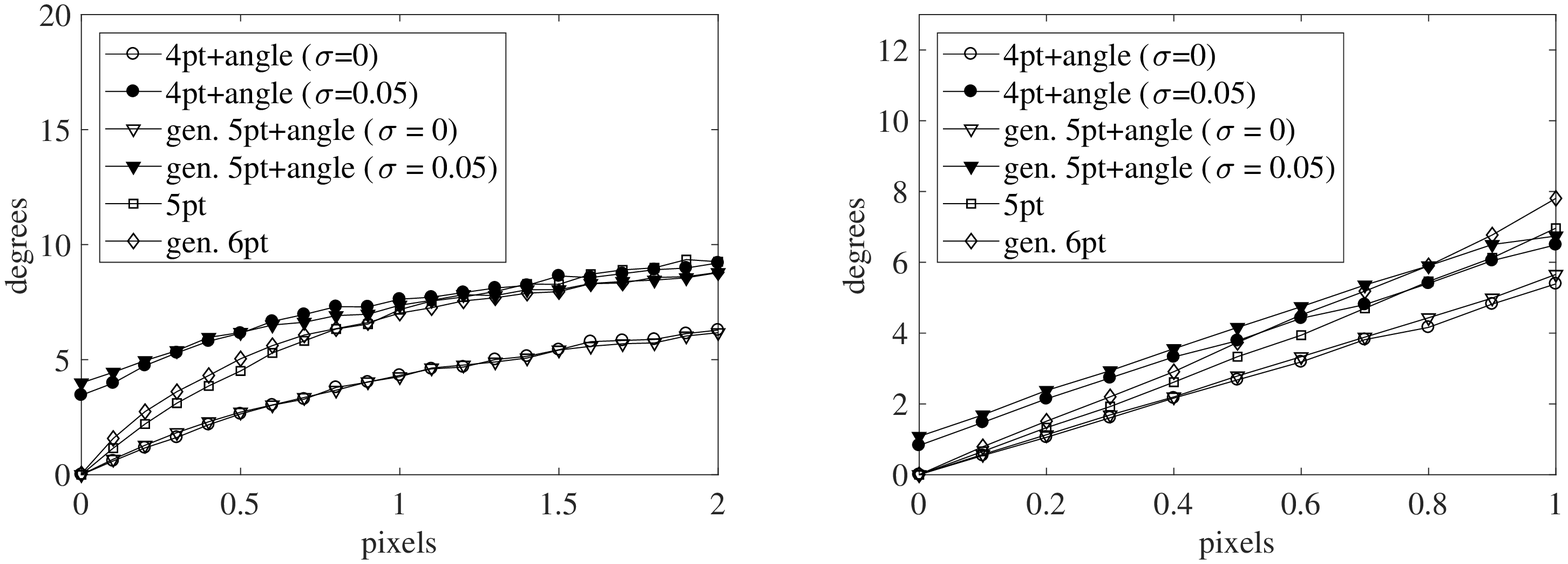}
\caption{Translational errors against image noise. Left: forward motion, right: sideways motion}\label{fig:mT}
\end{figure}

\begin{figure}[t]
\centering
\includegraphics[width=1\hsize]{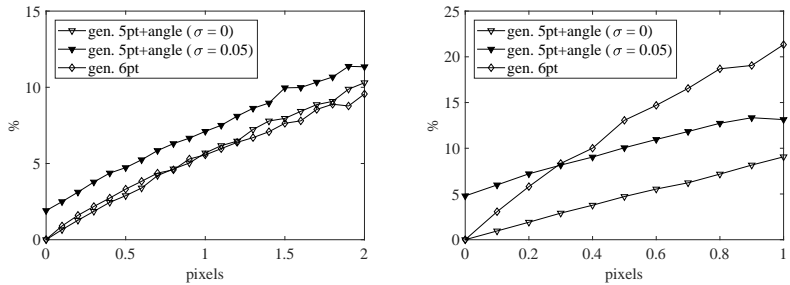}
\caption{Relative errors in translation scale against image noise. Left: forward motion, right: sideways motion}\label{fig:mS}
\end{figure}

\subsection{Performance in Presence of Outliers}

To ensure the robustness of a minimal solver in presence of outliers (incorrect matches), it is a common practice to use a hypothesize-and-test architecture such as Random Sample Consensus (RANSAC)~\cite{FB}. The RANSAC framework consists in the following major steps. First, a number of minimal subsets of the data (samples) are randomly chosen from the entire set of observations. Second, a minimal solver is used for each sample to generate a set of hypotheses. Third, each hypothesis is subsequently scored by the scoring function saying if an observation is an inlier or outlier. Finally, the winner is the hypothesis supported by the maximal number of inliers.

In our experiments, the hypotheses for regular cameras were scored by the Sampson approximation error~\cite{HZ}, whereas the hypotheses for generalized cameras were scored by the reprojection error. The number of observations was set to 100 and the fraction of outliers was set to~30\%. The results are presented in Fig.~\ref{fig:RANSACmTS}. Each point on the plots is a mean of 100 trials.

\begin{figure}[t]
\centering
\includegraphics[width=1\hsize]{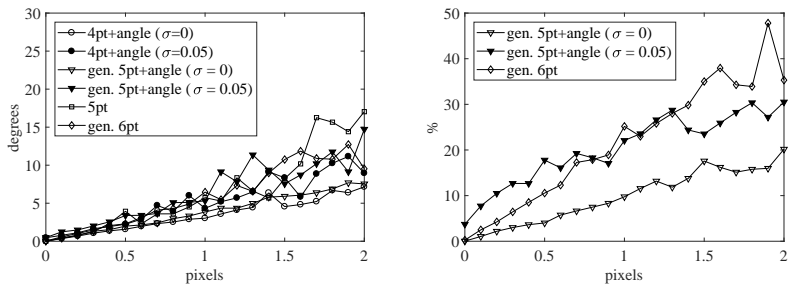}
\caption{Translational errors (left) and relative errors in translation scale (right) versus increasing image noise. Sideways motion, 100 observations and 30\% of outliers}\label{fig:RANSACmTS}
\end{figure}

\section{Experiments on Real Image Sequences}
\label{sec:real}

In this section, we test the algorithms on the publicly available EuRoC dataset~\cite{EuRoC} containing synchronized sequences of the data recorded from an IMU and two cameras on board a micro-aerial vehicle. Specifically, we test the solvers on the ``Machine Hall~01'' and ``Vicon Room~01'' datasets. For the 4pt+angle and 5-pt algorithms, the images taken by camera~``0'' were only used. For the generalized-camera solvers, the images from both cameras were involved.

The relative rotation angle is derived from the readings of a triple-axis gyroscope as follows. The gyroscope reading at time~$\xi_i$ is an angular rate 3-vector~$w_i$. Let $\Delta \xi_i = \xi_i - \xi_{i - 1}$, where $i = 1, \ldots, n$. Then the relative rotation matrix~$R_n$ between the $0$th and $n$th frames is approximately found from the recursion
\begin{equation}
R_i = \exp([w_i]_\times \Delta \xi_i) R_{i - 1},
\end{equation}
where $R_0 = I$. The relative rotation angle is then equal to $\arccos\frac{\tr R_n - 1}{2}$.

To prevent the baseline between two consecutive images from being too small and thus to make the computation for regular cameras more stable, we reduced the number of images by taking only each 4th image in the sequences. We also cropped the initialization tests in the beginning of ``Machine Hall~01'' dataset. The resulting sequences consist in total of 683 images for each dataset.

After that, we computed the rotational and translational errors for each pair of consecutive frames using the ground truth. The results are presented as box plots in Fig.~\ref{fig:MH01_mRT} and Fig.~\ref{fig:VR01_mRT}. The middle line represents the median and the blue box shows values from lower (25\%) to upper (75\%) quartile. The red markers indicate outliers. To make the comparison as objective as possible, we did not apply any iterative refinement during the estimation.

\begin{figure}[t]
\centering
\includegraphics[width=1\hsize]{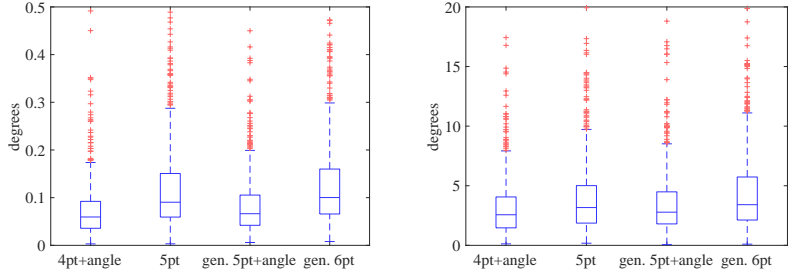}
\caption{``Machine Hall~01'' dataset. Rotational (left) and translational (right) errors for different solvers}\label{fig:MH01_mRT}
\end{figure}

\begin{figure}[t]
\centering
\includegraphics[width=1\hsize]{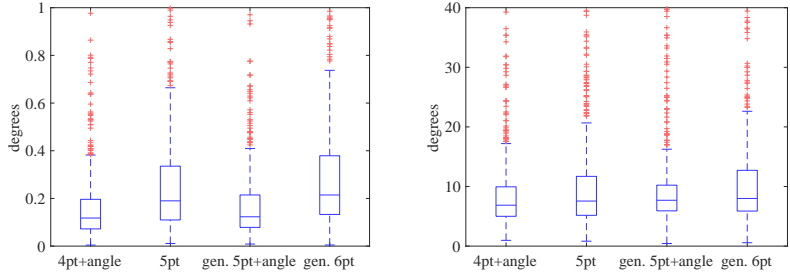}
\caption{``Vicon Room~01'' dataset. Rotational (left) and translational (right) errors for different solvers}\label{fig:VR01_mRT}
\end{figure}

\section{Discussion of Results}
\label{sec:disc}

The main contribution of this paper is the introduction of two new minimal algorithms for the relative pose estimation problem: the first one is for regular calibrated cameras, and the second one is for generalized calibrated cameras. Both algorithms operate with several point correspondences in two views and also with the known relative rotation angle. The first algorithm is a significant improvement of the existing solution, whereas the second algorithm for generalized cameras is novel.

Our solutions have relatively small elimination templates for the Gr\"{o}bner basis computation: $16\times 36$ for regular cameras and $37\times 81$ for generalized cameras. In a series of experiments on simulated data, we demonstrated that our solutions are comparable or even faster than the state-of-the-art solvers. We also verify their numerical accuracy for noise-free data and robustness for noisy data.

It is shown on synthetic and real datasets that, being encapsulated into a hy\-pothesize-and-test architecture such as RANSAC, our solutions become robust in presence of outliers and again demonstrate better performance than the existing solvers.

Finally, both our solutions use the same block transformation of the elimination template to significantly simplify the Gr\"{o}bner basis construction. We expect that this approach might be applied to some other computer vision problems described in terms of polynomial equations and containing a low-degree constraint like in Eq.~\eqref{eq:quadconstr}.

\subsection*{Acknowledgements}

The work of E.M. was supported by Act 211 Government of the Russian Federation, contract No.~02.A03.21.0011.

\appendix
\section{Proof of Lemma~\ref{lem:FF}}
\label{app:A}

\setcounter{lemma}{0}

\begin{lemma}
Let $F_{ijk}$ be a matrix defined in~\eqref{eq:Fijk}. Then the following identity holds
\[
\det F_{ijk} = \det F_{jki}.
\]
\end{lemma}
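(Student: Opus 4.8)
The plan is to expand both determinants explicitly and show that the difference vanishes as an identity in the vectors $q'_i, q'_j, q'_k$, $q''_i, q''_j, q''_k$ and the rotation matrix $R$. Recall that
\[
\det F_{ijk} = ({q''_j}^{\mathrm T} R p'_{ij})({p''_{ik}}^{\mathrm T} R q'_k) - ({q''_k}^{\mathrm T} R p'_{ik})({p''_{ij}}^{\mathrm T} R q'_j),
\]
where $p'_{ij} = q'_i \times q'_j$ and $p''_{ij} = q''_i \times q''_j$. Cyclically permuting $(i,j,k)\mapsto(j,k,i)$ gives $\det F_{jki}$ in terms of $p'_{jk}, p'_{ji}$ and $p''_{jk}, p''_{ji}$, with $p'_{ji} = -p'_{ij}$ etc. First I would rewrite every scalar of the form $a^{\mathrm T} R b$ appearing here, collecting the eight resulting scalar products and noting that by antisymmetry of the cross product the two expansions involve the same set of vectors up to signs, so that the claimed identity becomes a purely multilinear statement.

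The key tool will be a vector identity that relates products like $({q''_j}^{\mathrm T} R p'_{ij})({p''_{ik}}^{\mathrm T} R q'_k)$ to ``reshuffled'' products by moving the cross products through. Concretely, I would use the scalar triple product identity $(a\times b)^{\mathrm T} c = \det[a\ b\ c]$ together with the fact that for $R\in\mathrm{SO}(3)$ one has $R(a\times b) = (Ra)\times(Rb)$ and $\det R = 1$. Writing $p'_{ij} = q'_i\times q'_j$ and applying $R$, each factor $x^{\mathrm T} R p'_{ij}$ becomes $x^{\mathrm T}\big((Rq'_i)\times(Rq'_j)\big) = \det[\,x \mid Rq'_i \mid Rq'_j\,]$, and similarly $p''_{ik}{}^{\mathrm T} R y = \big((q''_i\times q''_k)\big)^{\mathrm T} R y = \det[\,q''_i \mid q''_k \mid R y\,]^{\mathrm T}$-type expressions; after substituting $x = q''_j$, $y = q'_k$, etc., every summand of $\det F_{ijk}$ and $\det F_{jki}$ is a product of two $3\times 3$ determinants built from the vectors $Rq'_i, Rq'_j, Rq'_k$ and $q''_i, q''_j, q''_k$. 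The identity then follows by a finite (if slightly tedious) application of the Cauchy–Binet / Plücker-type relations among such products of determinants — equivalently, by observing that $\det F_{ijk}$ can be recognized, after this rewriting, as a single degree-$(3,3)$ bracket polynomial in the two triples of vectors that is manifestly cyclically symmetric in the index triple $(i,j,k)$.

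The main obstacle is the bookkeeping: one must be careful with the signs introduced by the transpositions $p'_{ji} = -p'_{ij}$, $p''_{ki} = -p''_{ik}$ and by reordering columns inside the $3\times 3$ determinants, and verify that all spurious terms cancel in the difference $\det F_{ijk} - \det F_{jki}$. A clean way to organize this — which I would adopt to avoid a brute-force expansion — is to pick generic coordinates in which, say, $q'_i, q'_j, q'_k$ form the standard basis (legitimate since the identity is polynomial and $R$-equivariant), reducing $p'$-vectors to basis vectors and collapsing each determinant factor to a single matrix entry of $R$ or of the $q''$'s; the identity then reduces to a short explicit check. I expect the whole argument to occupy under a page once set up this way.
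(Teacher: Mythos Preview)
Your plan is essentially the paper's proof: both push $R$ through the cross products via $R(x\times y)=(Rx)\times(Ry)$, rename $a=Rq'_i,\dots,f=q''_k$, and then reduce $\det F_{ijk}-\det F_{jki}$ to an identity among products of scalar triple products in these six vectors. The one concrete device the paper adds---which is exactly the ``Cauchy--Binet / Pl\"ucker-type relation'' you allude to---is the formula $z_1^{\mathrm T}(x_1\times y_1)\cdot z_2^{\mathrm T}(x_2\times y_2)=\det\bigl(\begin{smallmatrix}x_1^{\mathrm T}\\y_1^{\mathrm T}\\z_1^{\mathrm T}\end{smallmatrix}\,[x_2\ y_2\ z_2]\bigr)$, which collapses each summand into a single $3\times3$ determinant of inner products and turns the claim into a four-determinant identity checked directly.
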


\begin{proof}
First, recall that
\begin{equation}
R (x\times y) = (Rx)\times (Ry)
\end{equation}
for any 3-vectors~$x, y$ and $R \in \mathrm{SO}(3)$. We also need the following formula
\begin{equation}
\label{eq:auxformula}
z_1^{\mathrm T}(x_1\times y_1) \cdot z_2^{\mathrm T}(x_2\times y_2) = \det \left(\begin{bmatrix} x_1^{\mathrm T} \\ y_1^{\mathrm T} \\ z_1^{\mathrm T} \end{bmatrix} \begin{bmatrix} x_2 & y_2 & z_2 \end{bmatrix} \right),
\end{equation}
which evidently holds for any 3-vectors $x_i$, $y_i$ and~$z_i$.

Let us denote $a = R q'_i$, $b = R q'_j$, $c = R q'_k$, $d = q''_i$, $e = q''_j$, $f = q''_k$. Then, it follows from the definition of matrix~$F_{ijk}$ that
\begin{equation}
F_{ijk} = \begin{bmatrix} e^{\mathrm T}(a\times b) & b^{\mathrm T}(d\times e) \\ f^{\mathrm T}(a\times c) & c^{\mathrm T}(d\times f) \end{bmatrix}
\end{equation}
and
\begin{equation}
F_{jki} = \begin{bmatrix} f^{\mathrm T}(b\times c) & c^{\mathrm T}(e\times f) \\ d^{\mathrm T}(b\times a) & a^{\mathrm T}(e\times d) \end{bmatrix}.
\end{equation}
Denote for short by $xy$ the scalar product of vectors~$x$ and~$y$. Then, equality~\eqref{eq:FF} together with formula~\eqref{eq:auxformula} assume that
\begin{multline}
\det \begin{bmatrix} ad & af & ac \\ bd & bf & bc \\ de & ef & ce \end{bmatrix} - \det \begin{bmatrix} ad & cd & df \\ ae & ce & ef \\ ab & bc & bf \end{bmatrix} \\= \det \begin{bmatrix} be & bd & ab \\ ce & cd & ac \\ ef & df & af \end{bmatrix} - \det \begin{bmatrix} be & ae & de \\ bf & af & df \\ bc & ac & cd \end{bmatrix}.
\end{multline}
The latter identity is verified directly. Lemma~\ref{lem:FF} is proved.
\end{proof}

\bibliographystyle{amsplain}

\end{document}